\documentclass{article}

 \usepackage[preprint]{neurips_2026}

\usepackage[utf8]{inputenc} %
\usepackage[T1]{fontenc}    %
\usepackage{hyperref}       %
\usepackage{url}            %
\usepackage{booktabs}       %
\usepackage{amsfonts,amssymb,amsmath,amsthm}
\usepackage{algorithm}
\usepackage{algpseudocode}
\usepackage{mathtools}
\usepackage{nicefrac}       %
\usepackage{microtype}      %
\usepackage{xcolor}         %
\usepackage{graphicx}       %
\usepackage{subcaption}     %

\newtheorem{lemma}{Lemma}
\newtheorem{proposition}{Proposition}

\newcommand{\E}{\mathbb{E}}
\newcommand{\R}{\mathbb{R}}
\newcommand{\Cov}{\mathrm{Cov}}
\newcommand{\sgn}{\mathrm{sgn}}

\definecolor{exactcolor}{HTML}{0072B2} 
\definecolor{gprcolor}{HTML}{E69F00}   
\newcommand{\dotexact}{\textcolor{exactcolor}{\rule[0.25ex]{0.6em}{0.6em}}}
\newcommand{\dotgpr}{\textcolor{gprcolor}{\rule[0.25ex]{0.6em}{0.6em}}}
\newcommand{\dotjbb}{\textcolor{gprcolor}{\rule[0.25ex]{0.6em}{0.6em}}}
\newcommand{\dothb}{\textcolor{exactcolor}{\rule[0.25ex]{0.6em}{0.6em}}}

\usepackage{amssymb}
\usepackage{xcolor}

\newcommand{\mkFGSM}{\textcolor{gray}{\scalebox{0.9}{$\bullet$}}}
\newcommand{\mkFGM}{\textcolor{gray}{\scalebox{0.7}{$\blacksquare$}}}
\newcommand{\mkRSFGSM}{\textcolor{gray}{\scalebox{0.8}{$\blacktriangle$}}}
\newcommand{\mkPGDtwo}{\textcolor{gray}{\scalebox{0.8}{$\blacklozenge$}}}
\newcommand{\mkPGDfive}{\textcolor{gray}{\scalebox{0.8}{$\blacktriangledown$}}}
\newcommand{\mkPGDten}{\textcolor{gray}{\scalebox{0.9}{$\boldsymbol{+}$}}}

\title{Fast Adversarial Attacks with Gradient Prediction}

\author{%
  Kamil Ciosek \\
  Spotify \\
  \texttt{kamilc@spotify.com} \\
  \And
  Aleksandr V. Petrov \\
  Spotify \\
  \texttt{aleksandrv@spotify.com} \\
  \AND
  Nicol\`{o} Felicioni \\
  Spotify \\
  \texttt{nicolof@spotify.com} \\
  \And
  Konstantina Palla \\
  Spotify \\
  \texttt{konstantinap@spotify.com} \\
}

\begin{document}
\maketitle

\begin{abstract}
Generating adversarial examples at scale is a core primitive for robustness evaluation, adversarial training, and red-teaming, yet even ``fast'' attacks such as FGSM remain throughput-limited by the cost of a backward pass. We introduce a family of attacks that eliminates the backward pass by predicting the input gradient from forward-pass hidden states via a lightweight linear regression. The approach is motivated by a kernel view of neural networks and is exact in the Neural Tangent Kernel regime, while remaining effective for practical finite-width models. Empirically, our methods recover much of FGSM's attack performance while using only a small fraction of the time, corresponding to a $532\%$ increase in throughput. These results suggest gradient prediction as a simple and general route to significantly faster adversarial generation under realistic wall-clock constraints.
\end{abstract}

\section{Introduction}
Adversarial examples expose small, structured perturbations that can reliably change a model's prediction \citep{szegedyIntriguing2014}. Beyond their diagnostic value, adversarial inputs are a practical tool for robustness evaluation and for training procedures that explicitly optimize against worst-case perturbations \citep{goodfellowAdversarial2015,madry2018}. A recurring operational obstacle is \emph{throughput}: modern robustness pipelines are often constrained by wall-clock time and accelerator availability, so the number of adversarial examples produced per unit time can matter as much as attack success rate.

Even ``one-step'' attacks inherit a fundamental bottleneck. FGSM uses a single update but still requires an input gradient of a chosen attack score \citep{goodfellowAdversarial2015}. For large networks (and especially large language models), this typically means an additional backward pass per example, which can dominate the cost of adversarial data generation.

We address this limitation by eliminating the backward pass. Our key idea is to \emph{predict} the input gradient from quantities already produced during the forward pass: a hidden state (representation) vector. Concretely, we fit a lightweight linear regressor that maps hidden states to input gradients and then perform an FGSM-style update using the predicted gradient. This design is motivated by a kernel/NTK view of neural networks \citep{jacot2018neural,lee2018deep,matthews2018gaussian,yang2019wide}, and is closely related in spirit to synthetic gradient ideas \citep{jaderberg2017decoupled,czarnecki2017understanding} (here applied to \emph{input} gradients rather than activation gradients).

\paragraph{Contributions.} We propose a family of methods for efficient adversarial example generation that replaces the backward pass with a learned linear gradient predictor. Across our HarmBench evaluations, the approach yields order-of-magnitude throughput improvements over the corresponding exact-gradient attacks (for example, achieving a 532\% increase for FGSM on Qwen3-4B; see Table~\ref{tab:main}). As a side benefit, because attack-time generation requires only forward passes, the method extends adversarial-robustness screening to settings where backpropagation is impractical, such as inference-only deployments.
\section{Preliminaries}

\paragraph{Adversarial attacks.}
We consider a fixed trained network $f_\theta$. Let $x\in\R^d$ denote the (continuous) input we perturb; for language models, $x$ can be understood as a flattened input embedding tensor. Let $z_\theta(x)\in\R^C$ denote the model's logits on $x$, and let $h_\theta(x)\in\R^{d_h}$ denote a chosen hidden state.

Given a target output $t$, define a scalar score
\[
S_\theta(x,t)\in\R,
\]
which measures how strongly the model supports $t$ given $x$. For standard $C$-way classification, a canonical choice of $S$ (and the one we make) is
$S_\theta(x,t)=z_{\theta,t}(x)$ (a single logit).\footnote{Other common objectives (e.g., margins or losses) are also used in the adversarial literature
\citep{goodfellowAdversarial2015,madry2018}, but in this paper we focus on the one-logit target score (or, for LLMs, its sum across tokens). } For autoregressive language models, the code uses a teacher-forced sum of logits
assigned to a fixed target continuation; we keep $S_\theta(x,t)$ abstract and only assume it is differentiable in $x$.
We define the input gradient
\[
g_\theta(x,t)\;\coloneqq\;\nabla_x S_\theta(x,t)\in\R^d.
\]
A targeted FGSM-style perturbation takes a single step in the sign of the target-score gradient:
\[
x_{\mathrm{adv}}\;=\;x+\varepsilon\,\sgn\!\big(g_\theta(x,t)\big).
\]

\paragraph{Gaussian processes (GPs) and the neural tangent kernel (NTK).}
A (vector-valued) Gaussian process (GP) is a collection of random vectors
$\{f(x)\in\R^m:x\in\R^d\}$ such that for any finite set of inputs $\{x_1,\dots,x_n\}$, the stacked vector
$[f(x_1)^\top,\dots,f(x_n)^\top]^\top$ is multivariate Gaussian \citep{williams2006gaussian}. A GP is determined by its mean function
and its kernel (covariance) function.

Wide neural networks with i.i.d.\ Gaussian weights can converge, at random initialization (and with a suitable parametrization), to a GP in the following sense:
for any fixed finite set of inputs, the joint distribution of outputs becomes (approximately) Gaussian as widths grow. As widths $\to\infty$ (with depth fixed and appropriate scaling),
the random function $x\mapsto f_\theta(x)$ converges in finite-dimensional distribution to a GP with a deterministic kernel recursion 
\citep{lee2018deep,matthews2018gaussian}. This correspondence extends to broad architecture classes beyond fully-connected networks
\citep{yang2019wide}.The NTK perspective \citep{jacot2018neural} studies gradient-based training in the same wide-network scaling. In this regime, the network
can be linearized around initialization:
\[
f_{\theta_0+\Delta\theta}(x)\;\approx\;f_{\theta_0}(x)+\langle \nabla_\theta f_{\theta_0}(x),\Delta\theta\rangle,
\]
so training dynamics under (suitably scaled) gradient descent are described by a kernel method with the (limiting) NTK computed at
initialization \citep{jacot2018neural}. For our purposes, the key takeaway is that in wide limits the random variables produced by a
forward pass (hidden states) and those produced by differentiating w.r.t.\ inputs (input gradients) admit tractable Gaussian/kernel
structure at initialization.

\paragraph{Gaussian conditioning for gradients.}
Denote by $H(x)$ the vector random variable representing the hidden state and by $G(x)$ the network input gradient. If $(H,G)$ is jointly Gaussian with means $\mu_H,\mu_G$ and covariances
$\Sigma_{HH}=\Cov(H,H)$ and $\Sigma_{GH}=\Cov(G,H)$, then the conditional mean is affine:
\begin{equation}
\E\!\left[G\mid H\right]
\;=\;
\mu_G+\Sigma_{GH}\Sigma_{HH}^{-1}(H-\mu_H).
\label{eq:gauss-cond}
\end{equation}
In the GP/NTK regime, finite collections of network representations and (input) derivatives are in fact jointly Gaussian under random
initialization; see \citet{williams2006gaussian,lee2018deep,matthews2018gaussian,jacot2018neural}. Appendix~\ref{app:gp-derivatives}
provides more background.

\paragraph{Stationary kernels.}
A kernel $k:\R^d\times\R^d\to\R$ is \emph{stationary} if $k(x+c,x'+c)=k(x,x')$ for all $x,x',c$,
equivalently $k(x,x')=\psi(x-x')$ for some $\psi$. In our idealized analysis, stationarity forces \emph{same-input} moments (means and covariances) of $(H(x),G(x))$ to be independent of $x$.\footnote{See section \ref{sec-method} for proof.}

\paragraph{Embedding vs token-level attacks (text models).}
Embedding-level attacks optimize continuous input embeddings directly, while token-level attacks operate in the discrete space of strings and
typically use gradients only as guidance. The Greedy Coordinate Gradient (GCG) attack \citep{zhouUniversal2023} is a representative
token-level method.

\section{Our Method}
\label{sec-method}
\begin{figure}[t]
\begin{minipage}[t]{0.49\linewidth}
\begin{algorithm}[H]
\caption{Grad-Predict FGSM (GP-FGSM)}
\begin{algorithmic}[]
\Require Model $f_\theta$ with logits $z_\theta(\cdot)$, perturbation budget $\varepsilon$, inputs/targets $(x^\star_i,t^\star_i)$, predictor coefficients $A$, $b$.
\For{$(x^\star_i,t^\star_i)$}
    \State $r_i \gets h_\theta^{(\ell)}(x^\star_i)$ \Comment{forward only}
    \State $\hat g_i \gets A\,\tilde r_i + b$ \Comment{predict grad.\ direction}
    \State $x^{\mathrm{adv}}_i \gets x^\star_i + \varepsilon\,\sgn(\hat g_i)$
\EndFor
\end{algorithmic}
\label{alg-gp}
\end{algorithm}
\end{minipage}\hfill
\begin{minipage}[t]{0.49\linewidth}
\begin{algorithm}[H]
\caption{Vanilla FGSM }
\begin{algorithmic}[]
\Require Model $f_\theta$ with logits $z_\theta(\cdot)$, perturbation budget $\varepsilon$, inputs/targets $(x^\star_i,t^\star_i)$.
\State \textit{(line left blank)}
\For{$(x^\star_i,t^\star_i)$}
    \State \textit{(line left blank)}
    \State $g_i \gets \nabla_x S_\theta(x^\star_i,t^\star_i)$ \Comment{compute gradient}
    \State $x^{\mathrm{adv}}_i \gets x^\star_i + \varepsilon\,\sgn(g_i)$
    \vspace{2pt}
\EndFor
\end{algorithmic}
\label{alg-vanilla}
\end{algorithm}
\end{minipage}
\end{figure}

\paragraph{Main idea.}
Our method eliminates the backward pass at attack time by predicting the input-gradient direction from a forward-pass hidden state.
Concretely, we fit a linear predictor (with a bias term) on a small training set:
\begin{equation}
\nabla_x S_\theta(x,t) \approx\ A\,h_\theta(x)+b,
\label{eq-approx}
\end{equation}
where $h_\theta(x)$ denotes the hidden state. We prove later in this section that equation \eqref{eq-approx} is exact in the NTK regime, for a special choice of embeddings. We demonstrate in section \ref{sec-experiments} that the approximation remains very good for purposes of adversarial attacks in practical settings. 

\paragraph{Proof structure.}
In the GP/NTK regime, collections of hidden states and input derivatives are jointly Gaussian under random initialization
\citep{williams2006gaussian,lee2018deep,matthews2018gaussian,jacot2018neural}. By the Gaussian conditioning identity
\eqref{eq:gauss-cond}, the conditional mean $\E[g_\theta(x,t)\mid h_\theta(x)]$ is affine in $h_\theta(x)$, with coefficients determined by the mean and covariance of the pair. It remains to be shown that the coefficients do not depend on $x$. To prove this, we analyze an idealized architecture in which (i) the induced kernel is stationary and (ii) the mean functions are constant in $x$.
Lemma~\ref{lem:deep-sincos-stationary} establishes these properties for a broad class of deep random architectures that begin
with a sine--cosine embedding layer. Lemma~\ref{lem:stationary-implies-constant-sigma} then shows that stationarity forces the gradient-hidden same-input covariance matrices to be constant across $x$. Together with joint Gaussianity (NTK/GP regime) and \eqref{eq:gauss-cond}, this yields an input-independent affine map from hidden state to input gradient (Proposition~\ref{prop:affine-predictor}). All proofs appear in
appendix~\ref{app:proofs}.

\paragraph{Embedding Choice.}
Our proofs use an idealized sine--cosine embedding layer because it gives an exact and transparent symmetry: translating the continuous input by a constant vector $c$ induces a phase shift $\omega^\top c$ inside $\cos(\omega^\top x)$ and $\sin(\omega^\top x)$, which can be absorbed by a \emph{rotation} of the corresponding two-dimensional coefficient pair $(u_j,v_j)$. This ``phase $\leftrightarrow$ rotation'' mechanism is not arbitrary: real transformer positional schemes explicitly use the same trigonometric/rotation structure. For example, sinusoidal positional embeddings represent positions via $\sin$/$\cos$ at multiple frequencies, and RoPE \citep{su2024roformer} applies position-dependent \emph{2D rotations} to pairs of coordinates within attention (equivalently, a complex multiplication by $e^{i\theta}$), which is mathematically the same operation that underlies the rotation argument in lemma~\ref{lem:deep-sincos-stationary}. Our toy embedding is therefore best viewed as a minimal setting in which ``translations become rotations'' holds \emph{exactly}, letting us prove stationarity cleanly. We emphasize that learned token embeddings themselves do not enforce global translation invariance in embedding space; the goal here is only to capture a concrete symmetry that is closely related to the trigonometric rotation machinery already present in positional encoding and attention implementations. 
We now proceed with our first result, which uses this embedding structure to prove stationarity.
\begin{lemma}[Stationarity]\label{lem:deep-sincos-stationary}
Fix $d\in\mathbb{N}$ and let $p(\omega)$ be any probability distribution on $\R^d$.
For $m\in\mathbb{N}$, let $\{(\omega_j,u_j,v_j)\}_{j=1}^m$ be i.i.d.\ with
$\omega_j\sim p(\omega)$ and $(u_j,v_j)\sim\mathcal{N}(0,I_2)$, independent of $\omega_j$.
Define the random embedding map $e:\R^d\to\R^m$ by
\[
e_j(x)\;:=\;u_j\cos(\omega_j^\top x)+v_j\sin(\omega_j^\top x),\qquad j=1,\dots,m.
\]
Let $\Theta$ be any random variable, independent of $\{(\omega_j,u_j,v_j)\}_{j=1}^m$, and let
$R(\cdot;\Theta):\R^m\to\R^{d_h}$ and $F(\cdot;\Theta):\R^m\to\R$ be measurable maps. Define the random fields
\[
H(x)\;:=\;R\big(e(x);\Theta\big)\in\R^{d_h},
\qquad
Y(x)\;:=\;F\big(e(x);\Theta\big)\in\R.
\]
Assume $\E\|H(x)\|<\infty$ and $\E[Y(x)^2]<\infty$ for all $x$. Define the kernel
\[
k(x,x')\;:=\;\E\big[Y(x)Y(x')\big].
\]
Then $k$ is stationary: for all $c\in\R^d$ and all $x,x'\in\R^d$, $k(x+c,x'+c)=k(x,x')$.
Moreover, the mean $\mu_H(x):=\E[H(x)]$ is constant in $x$.

If additionally $Y(\cdot)$ is differentiable in $x$ almost surely and $\E\|\nabla_x Y(x)\|<\infty$ for all $x$,
then the mean gradient $\mu_G(x):=\E[G(x)]$ with $G(x):=\nabla_x Y(x)$ is also constant in $x$.
\end{lemma}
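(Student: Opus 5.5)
The plan is a distributional-invariance argument: translating the input by $c$ leaves the law of the whole random field $x\mapsto e(x)$ unchanged, jointly with $\Theta$. All three claims then follow by pushing this invariance through the measurable maps $R$, $F$ and, for the gradient, through differentiation.

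\textbf{Invariance of a single unit.} Fix $c\in\R^d$ and write $\phi_j:=\omega_j^\top c$. Angle addition gives
\[
e_j(x+c)=\tilde u_j\cos(\omega_j^\top x)+\tilde v_j\sin(\omega_j^\top x),
\qquad
\begin{pmatrix}\tilde u_j\\ \tilde v_j\end{pmatrix}
=\begin{pmatrix}\cos\phi_j & \sin\phi_j\\ -\sin\phi_j & \cos\phi_j\end{pmatrix}\begin{pmatrix}u_j\\ v_j\end{pmatrix}.
\]
The rotation matrix depends only on $\omega_j$, and $(u_j,v_j)\sim\mathcal N(0,I_2)$ is independent of $\omega_j$. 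Conditional on $\omega_j$, the vector $(\tilde u_j,\tilde v_j)$ is therefore again $\mathcal N(0,I_2)$, because the standard Gaussian is rotation invariant. Hence $(\omega_j,\tilde u_j,\tilde v_j)\overset{d}{=}(\omega_j,u_j,v_j)$.

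\textbf{Invariance of the whole tuple.} The triples are i.i.d. and $\Theta$ is independent of all of them, so
\[
\big(\{(\omega_j,\tilde u_j,\tilde v_j)\}_{j\le m},\Theta\big)\overset{d}{=}\big(\{(\omega_j,u_j,v_j)\}_{j\le m},\Theta\big).
\]
The fields $x\mapsto(e(x+c),\Theta)$ and $x\mapsto(e(x),\Theta)$ are the same deterministic function of these two tuples. Consequently the processes $\{(e(x+c),\Theta)\}_x$ and $\{(e(x),\Theta)\}_x$ have the same law, including all finite-dimensional joint laws. Composing with the measurable maps $R$ and $F$, the joint law of $(H(x+c),Y(x+c),Y(x'+c))$ equals that of $(H(x),Y(x),Y(x'))$.

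\textbf{Stationarity and constant mean of $H$.} By the inequality $|Y(x)Y(x')|\le \tfrac12\big(Y(x)^2+Y(x')^2\big)$ together with $\E[Y(x)^2]<\infty$, the product $Y(x)Y(x')$ is integrable. Equality in law of the pair $(Y(x+c),Y(x'+c))$ and $(Y(x),Y(x'))$ then gives $k(x+c,x'+c)=k(x,x')$. The same argument applied to $H$ gives $\mu_H(x+c)=\mu_H(x)$ for every $c$, and taking $c=x_0-x$ shows that $\mu_H$ is constant.

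\textbf{Constant mean gradient.} This is the step needing the most care, since $G$ is defined by a limit rather than by a measurable map of $e(x)$ at a single point. I would handle it at the level of the underlying randomness, not through finite-dimensional laws. Writing $\xi=(\{(\omega_j,u_j,v_j)\},\Theta)$ and $T_c\xi=(\{(\omega_j,\tilde u_j,\tilde v_j)\},\Theta)$, we have the pathwise identity
\[
Y(x+c;\xi)=Y(x;T_c\xi)\quad\text{for all }x .
\]
Differentiating in $x$ on the almost-sure event where the sample path is differentiable gives $G(x+c;\xi)=G(x;T_c\xi)$. Since $T_c\xi\overset{d}{=}\xi$, and using $\E\|G(x)\|<\infty$ to justify taking expectations, we get $\mu_G(x+c)=\E[G(x;T_c\xi)]=\E[G(x;\xi)]=\mu_G(x)$.

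One subtlety remains in this last step. The almost-sure differentiability event for $\xi$ and for $T_c\xi$ must both have full measure. This holds because $T_c$ preserves the law of $\xi$, so the event $\{\xi: Y(\cdot;\xi)\text{ differentiable at }x\}$ and its preimage under $T_c$ have the same probability. Alternatively, one can express each component of $G(x)$ as an almost-sure limit of difference quotients, which are measurable functions of finitely many evaluations of $Y$, and pass the equality in law to the limit.
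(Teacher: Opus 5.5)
Your proposal is correct and follows the paper's proof essentially step for step. Both arguments rotate $(u_j,v_j)$ by the angle $\omega_j^\top c$ and use rotation invariance of $\mathcal{N}(0,I_2)$ conditional on $\omega_j$. Both then apply the pathwise identity $Y(x+c;\xi)=Y(x;T_c\xi)$ (and the analogue for $H$), take expectations, and differentiate that identity for the gradient; your added care about integrability of $Y(x)Y(x')$ and the differentiability null set is a small rigor point the paper leaves implicit.
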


\paragraph{Constant covariances.} The reason we needed stationarity in the first place is to make sure that the covariances in equation \ref{eq-affine} do not depend on the input $x$. We prove that formally in the following lemma.

\begin{lemma}[Constant covariances]\label{lem:stationary-implies-constant-sigma}
Let $H:\R^d\to\R^{d_h}$ and $Y:\R^d\to\R$ be random fields, and define $G(x)\coloneqq \nabla_x Y(x)\in\R^d$.
Assume the second moments are stationary: there exist $\Psi_{HH}:\R^d\to\R^{d_h\times d_h}$ and
$\Psi_{YH}:\R^d\to\R^{1\times d_h}$ such that for all $x,x'$,
\[
\E\!\left[H(x)H(x')^\top\right]=\Psi_{HH}(x-x'),
\qquad
\E\!\left[Y(x)H(x')^\top\right]=\Psi_{YH}(x-x').
\]
Assume additionally that differentiation and expectation can be exchanged in the cross-covariance:
for each coordinate $i$, $\partial_{x_i}\E[Y(x)H(x')^\top]=\E[(\partial_{x_i}Y(x))H(x')^\top]$.

Define the same-input covariances
\[
\Sigma_{HH}(x)\coloneqq \E[H(x)H(x)^\top]\in\R^{d_h\times d_h},
\qquad
\Sigma_{GH}(x)\coloneqq \E[G(x)H(x)^\top]\in\R^{d\times d_h}.
\]
Then $\Sigma_{HH}(x)$ and $\Sigma_{GH}(x)$ do not depend on $x$. In particular,
\[
\Sigma_{HH}(x)\equiv \Psi_{HH}(0),
\qquad
\Sigma_{GH}(x)\equiv \left.\nabla_r \Psi_{YH}(r)\right|_{r=0}.
\]
\end{lemma}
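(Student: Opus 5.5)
The plan is to treat the two covariances separately: the first is immediate from the hypothesis, and the second follows by differentiating the stationary cross-covariance. For $\Sigma_{HH}$, I would set $x'=x$ in the stationarity assumption, giving $\Sigma_{HH}(x)=\E[H(x)H(x)^\top]=\Psi_{HH}(x-x)=\Psi_{HH}(0)$. This is constant in $x$ by construction and needs no further work.

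For $\Sigma_{GH}$, fix a coordinate $i$ and regard $x$ and $x'$ as independent variables. The exchange assumption gives
\[
\E\big[(\partial_{x_i}Y(x))H(x')^\top\big]=\partial_{x_i}\E\big[Y(x)H(x')^\top\big]=\partial_{x_i}\Psi_{YH}(x-x').
\]
By the chain rule, with $r=x-x'$ and $\partial r/\partial x_i=e_i$, the right-hand side equals $(\partial_{r_i}\Psi_{YH})(x-x')$. Only after differentiating would I set $x'=x$. This yields the $i$-th row of $\Sigma_{GH}(x)$ as $(\partial_{r_i}\Psi_{YH})(0)$. Stacking the rows over $i=1,\dots,d$ gives $\Sigma_{GH}(x)=\nabla_r\Psi_{YH}(r)|_{r=0}$, a $d\times d_h$ matrix that does not depend on $x$.

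The main subtlety is this ordering in the second step. We must differentiate in $x$ with $x'$ held fixed, and only then evaluate on the diagonal. Differentiating $\E[Y(x)H(x)^\top]$ along the diagonal would instead give the derivative of the constant $\Psi_{YH}(0)$, which is zero and mixes in terms involving $\nabla H$.

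A second point to check is that $\Psi_{YH}$ is differentiable at $0$, so that the chain-rule step is legitimate. I would get this from the hypotheses. The exchange assumption presupposes that $x\mapsto\E[Y(x)H(x')^\top]$ is differentiable for each fixed $x'$. Since this map is $x\mapsto\Psi_{YH}(x-x')$, a translate of $\Psi_{YH}$, it follows that $\Psi_{YH}$ is differentiable everywhere, in particular at $r=0$. With that noted, the conclusion follows directly.
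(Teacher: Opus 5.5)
Your proposal is correct and follows the paper's proof essentially step for step: $\Sigma_{HH}(x)=\Psi_{HH}(0)$ by setting $x'=x$, and each row of $\Sigma_{GH}(x)$ comes from three moves done in order. You apply the exchange assumption with $x'$ held fixed, rewrite $\partial_{x_i}\Psi_{YH}(x-x')$ as $\partial_{r_i}\Psi_{YH}(r)$, and only then evaluate at $r=0$. One small refinement: your regularity remark slightly overstates things, since the exchange hypothesis only guarantees that the partial derivatives of $\Psi_{YH}$ exist, not full differentiability. Partial derivatives are all the argument needs, because the substitution is just a translation in the $i$-th coordinate.
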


\paragraph{Main result.} We conclude by invoking lemma \ref{lem:stationary-implies-constant-sigma} to justify equation \ref{eq-approx}. This result serves as the cornerstone of our method as it replaces an expensive backward pass with cheap linear inference.

\begin{proposition}[Input-independent affine gradient predictor]\label{prop:affine-predictor}
Assume that for each $x\in\R^d$, the pair $(H(x),G(x))$ is jointly Gaussian, with means $\mu_H,\mu_G$ that do not depend on $x$ and
same-input covariances $\Sigma_{HH},\Sigma_{GH}$ that do not depend on $x$, where $\Sigma_{HH}$ is invertible.
Then there exist $A\in\R^{d\times d_h}$ and $b\in\R^d$, independent of $x$, such that
\[
\E\!\left[G(x)\mid H(x)\right] \;=\; A\,H(x)+b\qquad \forall x.
\]
One valid choice is $A=\Sigma_{GH}\Sigma_{HH}^{-1}$ and $b=\mu_G-A\mu_H$.
\end{proposition}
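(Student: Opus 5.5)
The plan is to apply the Gaussian conditioning identity \eqref{eq:gauss-cond} pointwise in $x$, and then read off from the hypotheses that its coefficients do not depend on $x$.

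Fix an arbitrary $x\in\R^d$. By assumption $(H(x),G(x))$ is jointly Gaussian, with mean vector $(\mu_H,\mu_G)$ and covariance blocks $\Sigma_{HH},\Sigma_{GH}$ (and $\Sigma_{GG}$, which will not be needed). Since $\Sigma_{HH}$ is invertible, \eqref{eq:gauss-cond} gives
\[
\E[G(x)\mid H(x)] = \mu_G+\Sigma_{GH}\Sigma_{HH}^{-1}\big(H(x)-\mu_H\big).
\]
Rearranging, this equals $A\,H(x)+b$ with $A=\Sigma_{GH}\Sigma_{HH}^{-1}$ and $b=\mu_G-A\mu_H$.

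To make the argument self-contained rather than merely citing \eqref{eq:gauss-cond}, I would briefly verify the identity. Set $W:=G(x)-AH(x)$. Then
\[
\Cov(W,H(x))=\Sigma_{GH}-A\Sigma_{HH}=0 .
\]
Because $(W,H(x))$ is an affine image of a Gaussian vector, it is jointly Gaussian, so zero cross-covariance implies that $W$ is independent of $H(x)$. It follows that
\[
\E[G(x)\mid H(x)]=AH(x)+\E[W]=AH(x)+\mu_G-A\mu_H .
\]

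Finally, $A$ and $b$ are built only from $\mu_H,\mu_G,\Sigma_{HH},\Sigma_{GH}$, and by hypothesis none of these depends on $x$. Hence the same pair $(A,b)$ works for every $x$, which is exactly the claim.

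The statement's covariance matrices need a small care point. If $\Sigma_{GH}$ and $\Sigma_{HH}$ are read as the uncentered second moments used in Lemma~\ref{lem:stationary-implies-constant-sigma}, then the centered covariances are $\Sigma-\mu\mu^\top$. These are still constant in $x$, because the means are constant. So I would either state the proposition with centered covariances, or note this conversion explicitly. There is no real obstacle otherwise; the only delicate point is this distinction between centered and uncentered moments.
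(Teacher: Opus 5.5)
Your proof is correct and follows the paper's route: apply the Gaussian conditioning identity \eqref{eq:gauss-cond} at each fixed $x$, then observe that $A=\Sigma_{GH}\Sigma_{HH}^{-1}$ and $b=\mu_G-A\mu_H$ are built only from $x$-independent quantities. Your self-contained check via the decorrelated residual $G(x)-AH(x)$ is sound and goes beyond the paper, which only cites the identity. Your centered-versus-uncentered remark is also a valid point the paper leaves implicit: Lemma~\ref{lem:stationary-implies-constant-sigma} works with uncentered second moments, while the proposition needs centered covariances.
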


\section{Related Work}

\paragraph{Gradient-based attacks.}
Adversarial training aims to improve robustness by incorporating worst-case (or approximately worst-case) perturbations into the learning objective. Early work demonstrated that neural networks admit adversarial perturbations found via optimization \citep{szegedyIntriguing2014}. \citet{goodfellowAdversarial2015} introduced the Fast Gradient Sign Method (FGSM), enabling efficient single-step attacks using input gradients, while \citet{madry2018} formulated adversarial robustness as a min–max problem and advocated multi-step projected gradient descent (PGD). A large body of follow-up work focuses on reducing the cost of these gradient computations. For example, adversarial training in the style of  \citet{shafahi2019adversarialtrainingfree} recycles the gradient computed for the weight update to also construct the adversarial perturbation, YOPO restricts backpropagation to the first layer \citep{zhang2019propagateonceacceleratingadversarial}, randomly initialized FGSM can match PGD-based adversarial training when combined with appropriate training hyperparameters \citep{Wong2020Fast} and single-step methods with stabilization techniques (e.g., GradAlign) approximate multi-step robustness \citep{andriushchenko2020understanding}. Despite these improvements, such methods fundamentally rely on backpropagation to obtain input gradients. Our approach builds on this line of work, with the goal of making the methods faster by using synthetic gradients.

\paragraph{Gradient-free (black-box) attacks.}
An alternative paradigm eliminates backpropagation by estimating gradients through input perturbations. Zeroth-order optimization methods such as ZOO, NES and SPSA approximate $\nabla_x S(x)$, where $S(x)$ is the attack objective (e.g., a target logit or loss), using randomized queries \citep{chenzoo2017, ilyas2018blackbox, pmlr-v80-uesato18a}. These approaches are applicable in black-box settings but typically require hundreds to thousands of forward evaluations per example, trading gradient access for substantial query complexity. Like our approach, they avoid backpropagation; however, they estimate gradients via repeated input queries, whereas we predict them directly from forward-pass representations.

\paragraph{Gradient prediction.}
A separate line of work replaces gradient computation altogether by learning predictors of gradients. Synthetic gradient methods \cite{jaderberg2017decoupled, czarnecki2017understanding} decouple neural network modules by predicting gradients during training, and recent work shows that simple predictors can suffice in certain regimes \citep{ciosek2025lineargradientpredictioncontrol}. Our approach adapts this paradigm to adversarial attacks by predicting input gradients directly from forward-pass representations. Unlike prior approaches, we require only a single forward pass and do not rely on either backpropagation or query-based estimation.

\paragraph{Steering.} Steering methods seek to controllably alter a model's behavior at inference time, often by intervening on internal representations rather than updating weights. Inference-Time Intervention (ITI) \citep{liintervention2023} learns a small set of activation-space directions (applied to selected attention heads) and shifts activations along these directions during the forward pass to elicit more truthful responses, revealing a tunable tradeoff between truthfulness and other desired behaviors. Representation Engineering \citep{zouRepresentation2023} advocates a top-down view in which representations encode high-level concepts; it develops practical procedures to discover, monitor, and manipulate directions relevant to transparency and safety (e.g., honesty or harmfulness). Activation Addition \citep{turner2024activation} provides a lightweight steering primitive that computes a steering vector from differences in intermediate activations induced by contrastive prompt pairs and then adds this vector during inference, enabling rapid, optimization-free control over attributes such as sentiment or toxicity while aiming to preserve performance on off-target tasks. 
Our approach is related in that it also leverages internal representations at inference time. However, whereas steering methods modify hidden states to control behavior, we instead use hidden states as predictors of input gradients. In this sense, steering performs representation-space intervention, while our method performs representation-based gradient inference.

\paragraph{Neural Tangent Kernel (NTK).} NTK theory  \citep{jacot2018neural} offers a perspective on neural networks that yields a deterministic recursive kernel that governs finite-dimensional marginals, characterizing both their \emph{random function} limit at initialization and their \emph{training dynamics} under gradient descent via kernels. \citet{lee2018deep} and \citet{matthews2018gaussian} derive the infinite-width correspondence between deep fully-connected networks with i.i.d.\ Gaussian parameters and Gaussian processes. Extending beyond fully-connected settings, \citet{yang2019wide} provides general conditions under which a broad class of modern feedforward and recurrent architectures converge to Gaussian processes at infinite width. Our approach builds on this work, exploiting the fact that neural networks can be viewed as Gaussian Processes in the NTK limit.

\section{Experiments}
\label{sec-experiments}

\paragraph{Structure of Experiments} We structure our experimental results into two sections, each considering a different type of attack on the LLM. Section \ref{sec-attack-embedding} deals with attacks in a setting where we can perturb the input embedding, while section \ref{sec-attack-token} focuses on attacks via exchanging input tokens.

\paragraph{LLMs and Datasets} We wanted to evaluate for several different levels of model hardening quality so we used three LLMs: Llama 2 (released in 2023), Qwen 2.5 (released in 2024) and Qwen 3 (released in 2025). We tested all 3 models against JailbreakBench \citep{chao2024jailbreakbench} and HarmBench \citep{mazeika2024harmbench} -- both of which are leading open-source frameworks designed to evaluate and enhance the robustness of LLMs against jailbreak attacks.

\paragraph{Practical modifications.}
In practice, we use several simple technical modifications that substantially improve speed and stability. First, rather than extracting $h_\theta(x)$ from the final layer, we regress on an intermediate hidden state $h_\theta^{(\ell)}(x)$ and (at test time) obtain it via an early-exit forward pass, reducing compute while still preserving a strong signal for predicting input gradients. Second, since the attack update only uses $\sgn(g)$, we normalize regression targets to unit norm, which reduces scale variability across examples and focuses the predictor on gradient \emph{direction}. Third, we standardize hidden-state features coordinatewise, which is standard practice for linear regression with heterogeneous feature scales. Finally, we fit the predictor with ridge regression, which both stabilizes the solution in the presence of collinearity and provides a simple form of regularization that improves generalization from small training sets.

\subsection{Embedding-level attacks.}

\begin{table}[t]
    \centering \footnotesize 
    \caption{Attack success rate (ASR), perturbation throughput (Pert.), and success throughput (Succ.) for prompt-level GCG attacks on held-out prompts from HarmBench (HB) and JailbreakBench (JBB). 95\% bootstrap CIs from per-prompt resampling ($B=10{,}000$).}
    \label{tab:main}
    \setlength{\tabcolsep}{2pt}
\begin{tabular}{@{}l rrr rrr rrr@{}}
\toprule
 & \multicolumn{3}{c}{Qwen3-4B} & \multicolumn{3}{c}{Qwen2.5-3B} & \multicolumn{3}{c}{Llama-2-7B} \\
\cmidrule(lr){2-4} \cmidrule(lr){5-7} \cmidrule(lr){8-10}
Method & ASR\,(\%) & Pert.\,(ex/s) & Succ.\,(att/s) & ASR\,(\%) & Pert.\,(ex/s) & Succ.\,(att/s) & ASR\,(\%) & Pert.\,(ex/s) & Succ.\,(att/s) \\
\midrule 
\multicolumn{10}{c}{Exact gradient} \\
\cmidrule{1-10}
FGSM & $12.6_{-3.6}^{+3.9}$ & $5.1_{-0.0}^{+0.0}$ & $0.65_{-0.18}^{+0.20}$ & $24.5_{-4.8}^{+4.8}$ & $6.3_{-0.1}^{+0.0}$ & $1.55_{-0.31}^{+0.31}$ & $18.6_{-4.3}^{+4.6}$ & $7.4_{-0.0}^{+0.0}$ & $1.37_{-0.32}^{+0.35}$ \\
FGM & $9.7_{-3.2}^{+3.6}$ & $5.1_{-0.0}^{+0.0}$ & $0.50_{-0.17}^{+0.18}$ & $6.1_{-2.7}^{+3.1}$ & $6.1_{-0.0}^{+0.0}$ & $0.38_{-0.17}^{+0.19}$ & $10.0_{-3.2}^{+3.6}$ & $7.4_{-0.0}^{+0.0}$ & $0.74_{-0.24}^{+0.26}$ \\
RS-FGSM & $11.0_{-3.2}^{+3.6}$ & $5.1_{-0.0}^{+0.0}$ & $0.56_{-0.17}^{+0.18}$ & $22.8_{-4.8}^{+4.8}$ & $6.3_{-0.0}^{+0.0}$ & $1.43_{-0.30}^{+0.30}$ & $18.6_{-4.6}^{+4.6}$ & $7.3_{-0.0}^{+0.0}$ & $1.35_{-0.34}^{+0.34}$ \\
PGD-2 & $16.5_{-4.2}^{+4.2}$ & $2.6_{-0.0}^{+0.0}$ & $0.42_{-0.11}^{+0.11}$ & $31.0_{-5.1}^{+5.4}$ & $3.1_{-0.0}^{+0.0}$ & $0.97_{-0.16}^{+0.17}$ & $39.3_{-5.7}^{+5.7}$ & $3.7_{-0.0}^{+0.0}$ & $1.44_{-0.21}^{+0.21}$ \\
PGD-5 & $32.4_{-5.2}^{+5.5}$ & $1.0_{-0.0}^{+0.0}$ & $0.33_{-0.05}^{+0.06}$ & $59.9_{-5.8}^{+5.5}$ & $1.3_{-0.0}^{+0.0}$ & $0.76_{-0.07}^{+0.07}$ & $41.4_{-5.7}^{+5.7}$ & $1.5_{-0.0}^{+0.0}$ & $0.61_{-0.08}^{+0.08}$ \\
PGD-10 & $35.6_{-5.2}^{+5.5}$ & $0.5_{-0.0}^{+0.0}$ & $0.18_{-0.03}^{+0.03}$ & $72.1_{-5.1}^{+5.1}$ & $0.6_{-0.0}^{+0.0}$ & $0.46_{-0.03}^{+0.03}$ & $49.3_{-5.7}^{+5.7}$ & $0.7_{-0.0}^{+0.0}$ & $0.36_{-0.04}^{+0.04}$ \\
\addlinespace
\multicolumn{10}{c}{GPR (predicted)} \\
\cmidrule{1-10}
FGSM & $6.8_{-2.6}^{+2.9}$ & $60.5_{-0.2}^{+0.2}$ & $\mathbf{4.11}_{-1.57}^{+1.77}$ & $11.6_{-3.4}^{+3.7}$ & $73.4_{-0.3}^{+0.2}$ & $\mathbf{8.49}_{-2.51}^{+2.74}$ & $15.4_{-3.9}^{+4.3}$ & $47.4_{-0.2}^{+0.2}$ & $\mathbf{7.28}_{-1.87}^{+2.04}$ \\
FGM & $4.2_{-1.9}^{+2.3}$ & $59.6_{-0.3}^{+0.2}$ & $\mathbf{2.51}_{-1.16}^{+1.36}$ & $3.4_{-2.0}^{+2.4}$ & $71.4_{-0.2}^{+0.2}$ & $\mathbf{2.43}_{-1.45}^{+1.69}$ & $2.1_{-1.4}^{+1.8}$ & $47.0_{-0.1}^{+0.1}$ & $\mathbf{1.01}_{-0.67}^{+0.84}$ \\
RS-FGSM & $8.1_{-2.9}^{+3.2}$ & $60.3_{-0.2}^{+0.2}$ & $\mathbf{4.88}_{-1.75}^{+1.92}$ & $13.6_{-3.7}^{+4.1}$ & $72.3_{-0.3}^{+0.3}$ & $\mathbf{9.84}_{-2.71}^{+2.95}$ & $16.4_{-4.3}^{+4.3}$ & $47.1_{-0.1}^{+0.1}$ & $\mathbf{7.74}_{-2.01}^{+2.04}$ \\
PGD-2 & $6.5_{-2.6}^{+2.9}$ & $30.8_{-0.1}^{+0.1}$ & $\mathbf{2.00}_{-0.80}^{+0.89}$ & $7.5_{-2.7}^{+3.1}$ & $37.2_{-0.1}^{+0.1}$ & $\mathbf{2.78}_{-1.01}^{+1.15}$ & $13.6_{-3.9}^{+3.9}$ & $23.9_{-0.1}^{+0.1}$ & $\mathbf{3.25}_{-0.94}^{+0.95}$ \\
PGD-5 & $6.5_{-2.6}^{+2.9}$ & $12.5_{-0.0}^{+0.0}$ & $\mathbf{0.81}_{-0.32}^{+0.36}$ & $17.3_{-4.4}^{+4.4}$ & $15.2_{-0.0}^{+0.0}$ & $\mathbf{2.63}_{-0.67}^{+0.68}$ & $15.7_{-3.9}^{+4.3}$ & $9.7_{-0.0}^{+0.0}$ & $\mathbf{1.52}_{-0.38}^{+0.42}$ \\
PGD-10 & $3.9_{-1.9}^{+2.3}$ & $6.3_{-0.0}^{+0.0}$ & $\mathbf{0.25}_{-0.12}^{+0.14}$ & $18.7_{-4.4}^{+4.4}$ & $7.7_{-0.0}^{+0.0}$ & $\mathbf{1.43}_{-0.34}^{+0.34}$ & $15.4_{-3.9}^{+4.3}$ & $4.9_{-0.0}^{+0.0}$ & $\mathbf{0.75}_{-0.19}^{+0.21}$ \\
\bottomrule
\end{tabular}

\end{table}

\label{sec-attack-embedding}
\paragraph{Setup}
In this set of experiments, we perturb the continuous token embeddings which the model receives as input. Given a prompt\footnote{We only consider prompts for which the zero-shot method doesn't work ie.\ where the model refuses the harmful behavior without an intervention.}, we search for a small perturbation of its embedding sequence that causes the model to produce a harmful response instead of refusal. The perturbation is bounded by an $\epsilon$-ball of the original embedding sequence, according to $L_2$ or $L_{\infty}$ norm, depending on method. Our main optimization criterion is \emph{successful attack throughput}, i.e., the number of prompts for which we successfully find an adversarial embedding perturbation per second. Note that with this criterion we may prefer a method that has lower \emph{average success rate (ASR)} (i.e.,  the proportion of prompts for which adversarial perturbation is found) over the method with better ASR, but lower processing speed. When measuring successful attack throughput, we only include time required to \emph{generate} embedding perturbation. 

\paragraph{Predictor Training Details} The GPR predictor is a single linear map (with a bias feature) from a layer-$L$ hidden state to the per-token, unit-normalized gradient $\hat g_t = g_t/\lVert g_t\rVert_2$ of the target logit with respect to the input embedding at token $t$; each input-token position contributes one training sample. For every training prompt we (i) run a full forward/backward pass and record layer-$L$ input hidden states paired with $\hat g_t$, and (ii) take $K$ FGSM augmentation steps in input-embedding space, recomputing the forward/backward at each step and recording the resulting $(h, \hat g)$ pairs. Step-$k$ samples are downweighted by $\gamma^{k}$. Hidden-state features are standardized per dimension using the empirical mean and std over all stored samples and then concatenated with a constant bias column. The weights are obtained in closed form by a single weighted ridge regression with fixed $\lambda{=}1$, solved via an eigendecomposition, fitting all $d$ output dimensions jointly. The choice of hyperparameters is described in appendix \ref{sec-hypers-emb}, together with a benchmark reporting how sensitive we are to hyperparameter choice.

\paragraph{Main Results}
Table~\ref{tab:main} reports ASR, processing throughput (examples/sec), and successful-attack throughput (successful attacks/sec) on the 320-prompt HarmBench held-out test set for three target LLMs. Across all attack methods and all three models, the GPR variants run roughly an order of magnitude faster, yielding successful-attack throughput improvements of $5$--$12\times$, while often maintaining a substantial fraction of the ASR of the original attack. We provide a more direct throughput plot in appendix \ref{sec-exp-throughput}.

\paragraph{Throughput-ASR Pareto frontier.}
Figure~\ref{fig:pareto} plots ASR against successful-attack throughput across all attack variants and models. It can be seen that GPR enables a new part of the Pareto frontier, with the largest gains for the cheap one-shot attacks (FGSM, RS-FGSM, FGM). The trade-off between ASR and throughput is further visible in figure~\ref{fig:pgd_scaling}, which plots both as a function of how many PGD steps the method takes. As can be expected, GPR works less well (but still has a higher throughput) for a larger number of steps. We theorize this is because of compounding of errors across several gradient steps.

\begin{figure}[t]
    \centering
    \includegraphics[width=\linewidth]{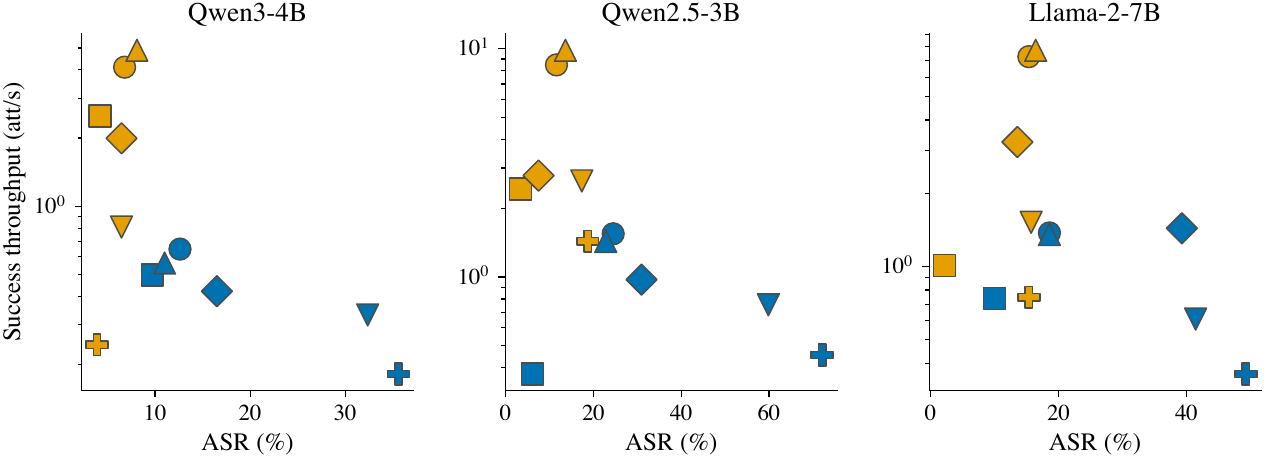}
    \caption{ASR vs.\ successful-attack throughput on HarmBench. Each point is a (method, model) configuration. Gradient source (color): \dotexact~exact, \dotgpr~GPR.
Attack (shape): \mkFGSM~FGSM, \mkFGM~FGM, \mkRSFGSM~RS-FGSM,
\mkPGDtwo~PGD-2, \mkPGDfive~PGD-5, \mkPGDten~PGD-10.}
    \label{fig:pareto}
\end{figure}

\begin{figure}[t]
    \centering
    \begin{subfigure}{0.48\linewidth}
        \includegraphics[width=\linewidth]{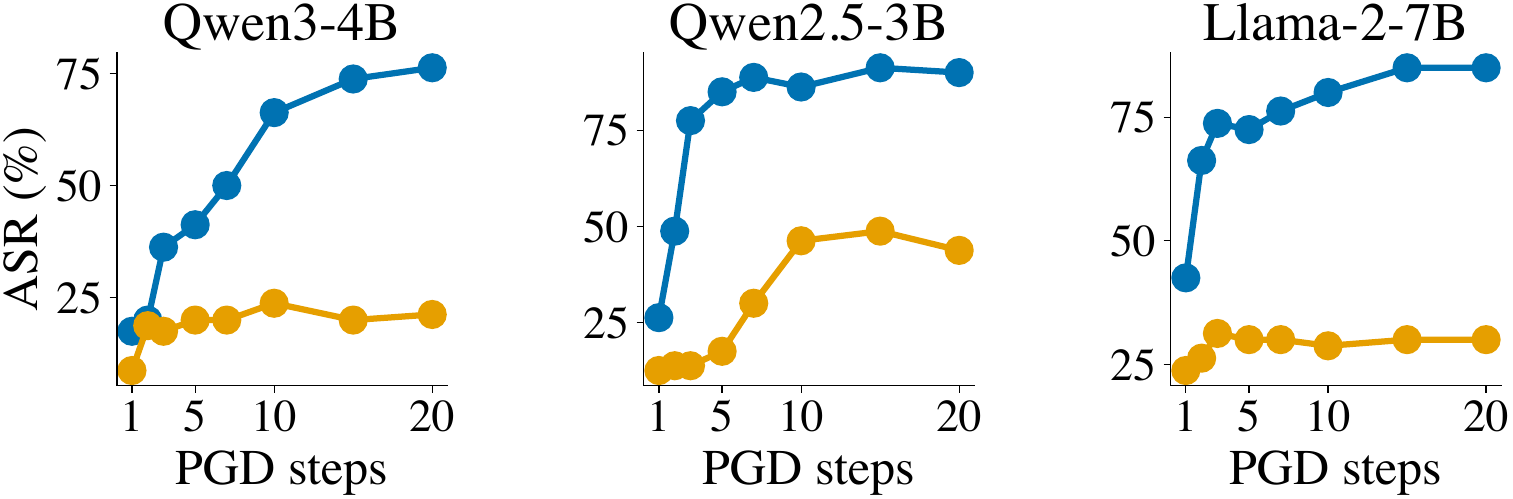}
        \caption{ASR vs.\ no.\ of PGD steps.}
    \end{subfigure}%
    \hfill
    \begin{subfigure}{0.48\linewidth}
        \includegraphics[width=\linewidth]{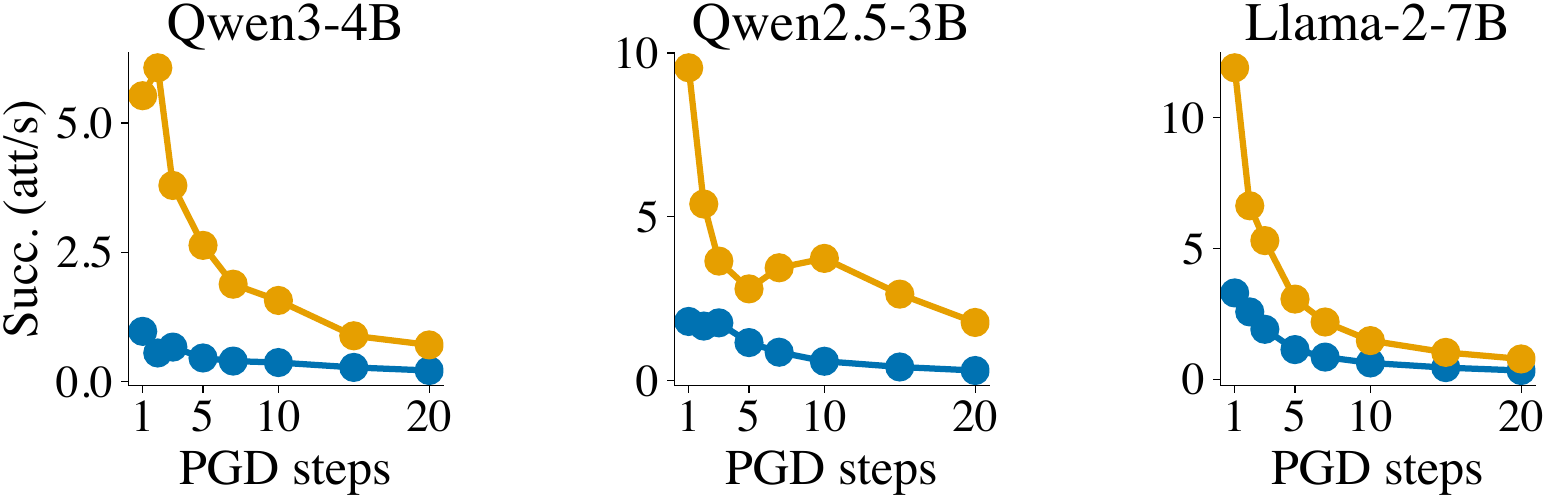}
        \caption{Successful-attack throughput vs.\ no.\ of PGD steps.}
    \end{subfigure}
    \caption{PGD scaling on HarmBench. Curves: \dotexact~PGD with exact gradient, \dotgpr~PGD-GPR}.
    \label{fig:pgd_scaling}
\end{figure}

\subsection{Token-level attacks.}
\label{sec-attack-token}

\paragraph{Setup.}
We evaluate whether gradient prediction can accelerate discrete prompt attacks, where the adversary optimizes an adversarial token suffix rather than a continuous perturbation. We use Greedy Coordinate Gradient (GCG) \citep{zhouUniversal2023}, which iteratively uses token-embedding gradients to propose single-token replacements and then selects among those candidates using exact forward losses on a target continuation. Our variant, GCG-GPR, keeps the same GCG objective, candidate evaluation, and acceptance rule, but replaces each backward-computed suffix gradient with a predicted gradient from an intermediate hidden representation. Thus, the predictor only affects which candidate tokens are considered; the accepted update is still chosen using the true autoregressive loss. Appendix~\ref{app:token-attack-details} gives the full objective and candidate-generation equations.

\paragraph{Experimental setting}
We compare exact-gradient GCG and GCG-GPR on HarmBench and JailbreakBench using the same suffix-optimization protocol for both methods. For each benchmark/model configuration, both attacks optimize the same target-continuation objective and use exact forward losses to select accepted token substitutions; they differ only in whether candidate tokens are proposed from a backward-computed gradient or from the learned GPR gradient predictor. We fit the predictor on augmented suffix states generated from GCG trajectories, so that the training inputs resemble the iterates encountered during the attack. We report attack success rate (ASR), perturbation throughput, and successful-attack throughput, matching the metrics used for embedding-level attacks. All token-level attack hyperparameters, predictor-training settings, and evaluation settings are listed in Appendix~\ref{app:token-attack-hparams}.
\paragraph{Results}

\begin{table}[t]
\caption{Attack success rate (ASR), perturbation throughput (Pert.), and success throughput (Succ.) for prompt-level GCG attacks on held-out prompts from HarmBench (HB) and JailbreakBench (JBB). 95\% bootstrap CIs from per-prompt resampling ($B=10{,}000$).}
\label{tab:gcg_prompt_results_table}
\centering
\setlength{\tabcolsep}{2pt}
\footnotesize

\begin{tabular}{@{}ll rrr rrr rrr@{}}
\toprule
 &  & \multicolumn{3}{c}{Qwen3-4B} & \multicolumn{3}{c}{Qwen2.5-3B} & \multicolumn{3}{c}{Llama-2-7B} \\
\cmidrule(lr){3-5} \cmidrule(lr){6-8} \cmidrule(lr){9-11}
Bench. & Method & ASR\,(\%) & Pert.\,(ex/s) & Succ.\,(att/s) & ASR\,(\%) & Pert.\,(ex/s) & Succ.\,(att/s) & ASR\,(\%) & Pert.\,(ex/s) & Succ.\,(att/s) \\
\midrule
\multicolumn{11}{c}{Exact gradient} \\
\cmidrule{1-11}
HB & GCG & $5.0_{-2.2}^{+2.5}$ & $0.1_{-0.0}^{+0.0}$ & $0.01_{-0.00}^{+0.00}$ & $49.4_{-5.3}^{+5.3}$ & $0.2_{-0.0}^{+0.0}$ & $0.08_{-0.01}^{+0.01}$ & $17.2_{-4.1}^{+4.4}$ & $0.1_{-0.0}^{+0.0}$ & $0.02_{-0.01}^{+0.01}$ \\
JBB & GCG & $0.0_{-0.0}^{+0.0}$ & $0.1_{-0.0}^{+0.0}$ & $0.00_{-0.00}^{+0.00}$ & $60.0_{-10.0}^{+10.0}$ & $0.2_{-0.0}^{+0.0}$ & $0.10_{-0.02}^{+0.02}$ & $21.0_{-8.0}^{+8.0}$ & $0.1_{-0.0}^{+0.0}$ & $0.02_{-0.01}^{+0.01}$ \\
\addlinespace
\multicolumn{11}{c}{GPR (predicted)} \\
\cmidrule{1-11}
HB & GCG & $4.4_{-2.2}^{+2.5}$ & $0.3_{-0.0}^{+0.0}$ & $\mathbf{0.01}_{-0.01}^{+0.01}$ & $50.9_{-5.3}^{+5.6}$ & $0.3_{-0.0}^{+0.0}$ & $\mathbf{0.18}_{-0.02}^{+0.02}$ & $19.4_{-4.4}^{+4.4}$ & $0.2_{-0.0}^{+0.0}$ & $\mathbf{0.04}_{-0.01}^{+0.01}$ \\
JBB & GCG & $0.0_{-0.0}^{+0.0}$ & $0.3_{-0.0}^{+0.0}$ & $0.00_{-0.00}^{+0.00}$ & $69.0_{-9.0}^{+9.0}$ & $0.3_{-0.0}^{+0.0}$ & $\mathbf{0.24}_{-0.03}^{+0.03}$ & $21.0_{-8.0}^{+8.0}$ & $0.2_{-0.0}^{+0.0}$ & $\mathbf{0.04}_{-0.01}^{+0.01}$ \\
\bottomrule
\end{tabular}

\end{table}

Table~\ref{tab:gcg_prompt_results_table} reports ASR, perturbation throughput, and successful-attack throughput for exact-gradient GCG and GCG-GPR on HarmBench and JailbreakBench. Across the evaluated prompt-level settings, GCG-GPR preserves the attack effectiveness of exact GCG: ASR matches exact GCG within error bars in all settings, and improves on Qwen2.5-3B on both HarmBench and JailbreakBench. At the same time, replacing the backward-computed GCG gradient with a predicted gradient increases perturbation throughput by roughly \(1.5\)--\(3\times\), yielding \(2\)--\(2.4\times\) higher successful-attack throughput in the nonzero-success regimes. The only setting in which both methods obtain \(0\%\) ASR is Qwen3-4B on JailbreakBench, but this indicates that this benchmark/model pair is difficult for GCG-style attacks, rather than a flaw in our GPR method. Overall, these results show that gradient prediction remains useful in the discrete token-optimization setting, where the predicted gradient only needs to propose promising token substitutions before exact forward-pass evaluation selects the final update.

\subsection{Sensitivity to gradient-prediction quality}
\label{sec:sensitivity}

The embedding-level main results in Section~\ref{sec-attack-embedding} use HarmBench. We additionally evaluated the same attacks on JailbreakBench (Appendix~\ref{sec-emb-jbb}, Table~\ref{tab:main-jbb}). On Qwen2.5-3B, GPR matches the exact-gradient ASR within CIs, consistent with the HarmBench picture; on Qwen3-4B and Llama-2-7B, however, pure-GPR FGSM drops to near-zero ASR. Because exact-gradient FGSM is also weak on these JBB cells (4--6\% ASR), the gap is partly inherited from the underlying attack, but GPR clearly degrades further than the exact-gradient baseline.

To diagnose this, we run a mixed-gradient ablation: at each FGSM step we use
$g_{\mathrm{mix}} = (1-\alpha)\,\hat g + \alpha\, g_{\mathrm{exact}}$ for $\alpha \in \{0, 0.1, 0.25, 0.5, 0.75, 0.9, 1\}$, holding all hyperparameters at their GPR-tuned values so that gradient quality is the only varying factor. Figure~\ref{fig:alpha_sweep} plots ASR vs.~$\alpha$ on HarmBench and JailbreakBench for Qwen2.5-3B and Llama-2-7B.

\begin{figure}[t]
    \centering
    \includegraphics[width=0.8\linewidth]{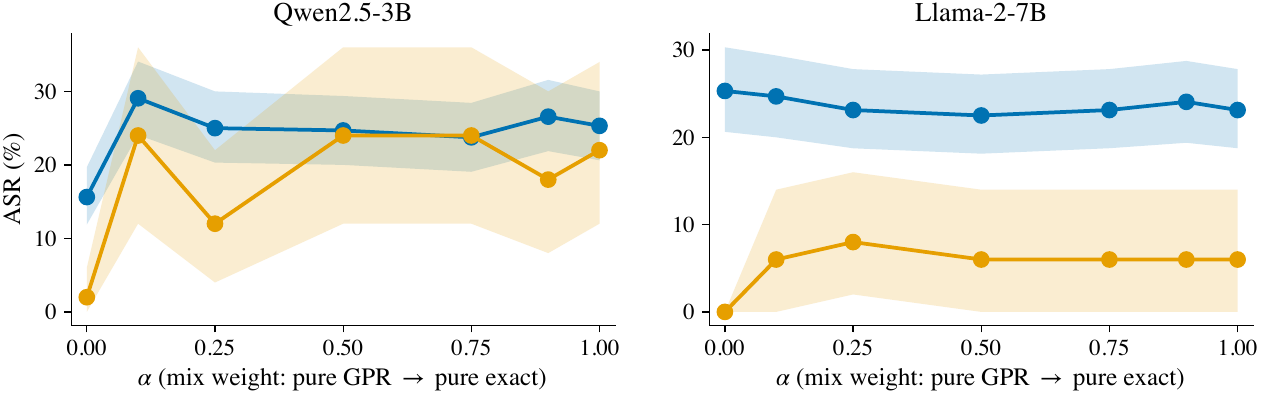}
    \caption{FGSM with mixed gradient (see section \ref{sec:sensitivity}). $\alpha=0$ is pure GPR; $\alpha=1$ is pure exact FGSM with hyperparameters held at their GPR-tuned values, so the $\alpha=1$ endpoint is not identical to the exact-FGSM row of Table~\ref{tab:main} (which uses its own tuned settings). Shaded bands are 95\% bootstrap CIs ($B=10{,}000$). Curves: \dothb~HarmBench, \dotjbb~JailbreakBench.}
    \label{fig:alpha_sweep}
\end{figure}

The curves are approximately step functions: ASR jumps from $\alpha=0$ to $\alpha=0.1$ and is then flat. On Llama-2/JBB, $\alpha=0.1$ already recovers $\sim$6\% ASR --- matching what the pure exact-gradient attack achieves. We read this as evidence that one-shot FGSM is robust to substantial gradient noise once coordinate-wise sign agreement crosses a threshold, and collapses below it. This reframes the near-zero JBB cells above: pure GPR sits just below this threshold on Qwen3-4B and Llama-2-7B, while a small admixture of exact gradient is enough to cross it. The plateau is also visible on HarmBench, where GPR is already comfortably above the threshold; and it explains why cosine similarity is a poor proxy for ASR (Figure~\ref{fig:cosine_llama2} in the appendix).

\section{Limitations}
Our experiments focus on three open LLMs and two jailbreak benchmarks, so the quantitative speed--quality tradeoffs may vary for other architectures, defenses, datasets, and deployment settings. The gradient predictor is also trained for a specific model, objective, and representation layer; retraining or retuning may be needed when these choices change. Finally, predicted-gradient errors can compound in multi-step attacks, as seen in the PGD scaling experiments, and embedding-level perturbations are primarily useful as a robustness-evaluation proxy rather than directly deployable token attacks.

\section{Broader Impact and Ethics}
We note that this work does not introduce new attack capabilities; the focus is on improving the efficiency of existing methods rather than making them more powerful. Moreover, all experiments were conducted on public models and datasets, with the aim of evaluating their robustness to fast attacks. Since our work can be used as a module in a system that generates attacks used to harden a model, we believe that the benefits of releasing the work outweigh the cons (a determined attacker could just use the existing expensive methods anyway, as opposed to our work).

\section{Statement on LLM Usage}
\label{sec:llm-usage}
Large language models were used during the preparation of this work in three ways: (1) as coding assistants to help generate implementation code for the experiments, (2) to polish the writing of the manuscript, and (3) to suggest ideas and intermediate steps during the development of the proofs. In all cases,
the human authors verified the output: code was tested and reviewed, written passages were checked and edited, and proof steps were independently checked for correctness before inclusion. The authors take full responsibility for the content of the paper.

\section{Conclusions}
We presented a simple approach for accelerating adversarial example generation by replacing the per-example backward pass with a lightweight, learned linear predictor that maps forward-pass hidden states to input-gradient directions. Our method produces FGSM-style perturbations using only forward computation (and, in practice, an early-exit hidden state), substantially improving adversarial throughput under realistic wall-clock constraints. We provided a motivating theoretical justification in an idealized GP/NTK setting. Empirically, the resulting gradient-prediction attack recovers a large fraction of the effectiveness of vanilla FGSM while using a small fraction of its compute, suggesting that representation-gradient relationships in modern networks are regular enough to exploit for fast adversarial generation.

\bibliographystyle{plainnat}
\bibliography{refs}

\appendix

\section{Experimental Details - token-level attacks}\label{app:token-attack-details}
This appendix gives the full token-level attack setup used in Section~\ref{sec-attack-token}.
\subsection{Setup and Method Details}
\paragraph{Setup.}
We evaluate token-level attacks using Greedy Coordinate Gradient (GCG) \citep{zhouUniversal2023}, a discrete suffix-optimization attack for autoregressive language models. Let $q_{1:m}$ denote the fixed user prompt and let $a_{1:\ell}\in [V]^\ell$ denote an adversarial suffix of $\ell$ tokens, where $V$ is the vocabulary size. The full model input is the concatenation
\[
x(q,a) \;=\; [q_{1:m}\,,a_{1:\ell}].
\]
For each prompt, GCG specifies a short target continuation $t_{1:H}$, typically an affirmative prefix that begins the response in a compliant form, like ``Sure, here is''. The attack objective is the teacher-forced negative log-likelihood of this target continuation:
\begin{equation}
\mathcal L_\theta(q,a,t)
\;=\;
-\sum_{h=1}^{H}
\log P_\theta\!\left(t_h \mid x(q,a), t_{<h}\right).
\label{eq:gcg-loss}
\end{equation}
Thus, the token-level adversarial problem is
\begin{equation}
\min_{a_{1:\ell}\in [V]^\ell}\;
\mathcal L_\theta(q,a,t).
\label{eq:gcg-discrete-objective}
\end{equation}
In the universal setting, the same suffix is optimized across prompts, and optionally across models with a shared tokenizer, by minimizing an aggregate objective
\begin{equation}
\mathcal L(a)
\;=\;
\sum_{\theta'\in\mathcal M}
\sum_{j=1}^{J}
\lambda_{\theta',j}\,
\mathcal L_{\theta'}\!\left(q^{(j)},a,t^{(j)}\right),
\label{eq:gcg-aggregate-loss}
\end{equation}
where $\mathcal M$ is the set of attacked models and $\lambda_{\theta',j}$ are optional weights.

\paragraph{Greedy Coordinate Gradient.}
GCG optimizes \eqref{eq:gcg-discrete-objective} by repeatedly proposing single-token substitutions. Let $E\in\mathbb R^{V\times d_e}$ be the token embedding matrix, and let $u_r = E_{a_r}\in\mathbb R^{d_e}$ be the embedding of the $r$-th suffix token. If $a_{r\leftarrow v}$ denotes the suffix obtained by replacing token $a_r$ with candidate token $v$, then a first-order approximation gives
\begin{equation}
\mathcal L(a_{r\leftarrow v})
-
\mathcal L(a)
\;\approx\;
\left(E_v - E_{a_r}\right)^\top
\nabla_{u_r}\mathcal L(a).
\label{eq:gcg-linearized-change}
\end{equation}
Equivalently, writing $\mathbf e_{a_r}\in\{0,1\}^V$ for the one-hot representation of $a_r$, GCG computes
\[
d_r
\;=\;
\nabla_{\mathbf e_{a_r}}\mathcal L(a)
\in\mathbb R^V,
\]
and ranks candidate replacements by the most negative coordinates of this gradient. For each suffix position $r$, it forms a candidate set
\begin{equation}
\mathcal C_r
\;=\;
\operatorname{TopK}_{v\in[V]}
\left(
-\left(E_v - E_{a_r}\right)^\top
\nabla_{u_r}\mathcal L(a)
\right),
\label{eq:gcg-candidate-set}
\end{equation}
or, equivalently, the top-$k$ entries of $-d_r$ up to the constant corresponding to the current token. GCG then samples a batch of candidate substitutions $(r,v)$ with $v\in\mathcal C_r$, evaluates the true loss \eqref{eq:gcg-loss} for each substituted suffix using forward passes, and commits the replacement with the smallest exact loss. The gradient is therefore used only to construct a promising discrete candidate set; the final greedy step is still selected by evaluating the actual autoregressive objective.

\paragraph{GCG-GPR.}
Our token-level method keeps the GCG loss and greedy candidate evaluation unchanged, but replaces the backward-computed gradient in \eqref{eq:gcg-candidate-set} with a predicted gradient. Specifically, from the same teacher-forced forward pass used to score the target continuation, we extract an intermediate hidden representation $h_\theta^{(\ell_0)}(q,a,t)$ and use our learned gradient predictor to estimate the suffix-embedding loss gradient:
\begin{equation}
\widehat G_\theta(q,a,t)
\;=\;
A\,\widetilde h_\theta^{(\ell_0)}(q,a,t) + b
\;\approx\;
\nabla_{u_{1:\ell}}\mathcal L_\theta(q,a,t),
\label{eq:gpr-gcg-gradient}
\end{equation}
where $\widetilde h_\theta^{(\ell_0)}$ denotes the standardized hidden-state feature vector and
$\widehat G_\theta(q,a,t)\in\mathbb R^{\ell\times d_e}$ is reshaped into per-position predictions
$\widehat g_r\in\mathbb R^{d_e}$. We then replace the exact GCG candidate score with
\begin{equation}
\widehat\Delta_{r,v}
\;=\;
\left(E_v - E_{a_r}\right)^\top \widehat g_r,
\qquad
\widehat{\mathcal C}_r
\;=\;
\operatorname{TopK}_{v\in[V]}\left(-\widehat\Delta_{r,v}\right).
\label{eq:gpr-gcg-candidates}
\end{equation}
After this substitution, the rest of GCG is unchanged: we sample candidate token replacements from
$\{\widehat{\mathcal C}_r\}_{r=1}^{\ell}$, evaluate their exact losses by forward pass, and accept the best replacement. This makes GPR-GCG a drop-in acceleration of GCG's candidate-generation step. Standard GCG requires a backward pass at every iteration to obtain $\nabla_{u_r}\mathcal L(a)$ or $\nabla_{\mathbf e_{a_r}}\mathcal L(a)$; GPR-GCG instead obtains an approximate gradient from a forward-pass representation and a lightweight linear map. Importantly, because the final token replacement is still chosen using the true loss, the method does not require the predicted gradient to be exact. It only needs to rank sufficiently many useful replacements into the top-$k$ candidate sets.

\subsection{Experimental hyperparameters}\label{app:token-attack-hparams}
The following settings specify the token-level experiments reported in Section~\ref{sec-attack-token}.

\begin{itemize}
    \item \textbf{Data and splits.} For HarmBench, predictor fitting uses the validation split, which contains 80 prompts, and evaluation uses the 320 held-out test prompts. For JailbreakBench, predictor fitting uses the 300-example judge-comparison split; this split is used only for judge benchmarking and is not used as the test-time jailbreak set. JailbreakBench evaluation uses the 100-prompt harmful test set. Prompt sampling uses seed 42.
    \item \textbf{GCG attack settings.} We initialize each attack with the dummy suffix used by the original GCG setup, namely ten repetitions of \texttt{! }. We run GCG for 25 optimization steps, use the top 64 candidate tokens per suffix position, and evaluate candidate substitutions with the default batch size from the GCG implementation. The target continuation is \texttt{Sure, here is}. We do not apply additional token filtering or allowed-token constraints.
    \item \textbf{Predictor-training augmentation.} On HarmBench, each validation prompt is first paired with the same dummy suffix used to initialize the exploit, namely ten repetitions of \texttt{! } tokens. We then run GCG for 25 steps from this initial suffix and record the initial suffix state together with each subsequent GCG iterate, giving 26 suffix states per prompt and $80\times26=2{,}080$ trajectory examples. To further enlarge the predictor-training set, for each trajectory example we generate seven additional variants by randomly changing one suffix token, while also keeping the original trajectory example. This gives $8\times2{,}080=16{,}640$ HarmBench predictor-training examples.
    \item \textbf{GCG-GPR predictor settings.} We use hidden layer $\ell_0=18$ for gradient prediction. The predictor normalizes each true suffix gradient to unit norm per token, flattens hidden states and gradients across token positions, standardizes hidden-state features coordinatewise, appends a bias feature, and fits a full-rank weighted linear ridge regression. All stored examples use unit sample weight. The ridge parameter is 100. The augmentation procedure is GCG-trajectory augmentation plus random one-token suffix perturbations as described above.
    \item \textbf{Evaluation and timing.} We use the HarmBench custom judge for HarmBench and JailbreakJudge for JailbreakBench; JailbreakJudge is Llama-3-70B-Instruct quantized and served with Ollama. Attacked-model responses use default generation parameters. ASR is the number of successful attacks divided by the total number of evaluation prompts. We use the default timing boundary from the attack implementation, without an additional timing exclusion. Experiments run on A100 GPUs using bfloat16 precision. Confidence intervals use bootstrap resampling with $B=10{,}000$.
\end{itemize}

\section{Experimental details - embedding attacks}
\subsection{Hyperparameters}
\label{sec-hypers-emb}
\paragraph{Hyperparameter Selection} All hyperparameters are selected on a disjoint dataset split, not used for testing. Tuning proceeds greedily through four phases, each fixing the winners of the previous one and selecting by GPR successful-attack throughput (ASR$\times|\mathcal{D}|/$wall-time). \textbf{Phase 1} (predictor quality) sweeps the joint grid $\text{target layer}\times\text{aug.\ steps}\times\text{aug.\ weight decay}\in{9,12,15,18,21,24}\times{0,5,10,15,20,30}\times{0.7,0.8,0.9,0.95,1.0}$ at fixed $\epsilon{=}0.005$, pruning the redundant decay axis when aug.\ steps$=0$. \textbf{Phase 2} (perturbation budget) sweeps $\epsilon\in{0.001,0.002,0.003,0.005,0.007,0.01,0.015,0.02}$ for $\ell_\infty$ methods and $\epsilon\in{0.25,0.5,0.75,1,1.5,2,3,5}$ for $\ell_2$ methods, choosing the best $\epsilon$ per norm. \textbf{Phase 3} (method-specific HPs) then tunes (a) the PGD inner step size over ${\epsilon/N,2\epsilon/N,10^{-3},2{\times}10^{-3}}$ with $N{=}5$, independently for vanilla PGD and PGD-GPR; (b) the RS-FGSM step multiplier $\alpha/\epsilon\in{0.5,0.75,1,1.25,1.5,2,2.5}$; and (c) the PGD step-count Pareto frontier $N\in{1,2,3,5,7,10,15,20}$.

\paragraph{Hyperparameter Sensitivity.}
Figure~\ref{fig:eps_sensitivity} ablates the perturbation budget~$\epsilon$ and Figure~\ref{fig:weight_decay_ablation} ablates the augmentation weight decay used during predictor training; both knobs trade off ASR against gradient-prediction quality.

\begin{figure}[t]
    \centering
    \begin{subfigure}{0.49\linewidth}
        \includegraphics[width=\linewidth]{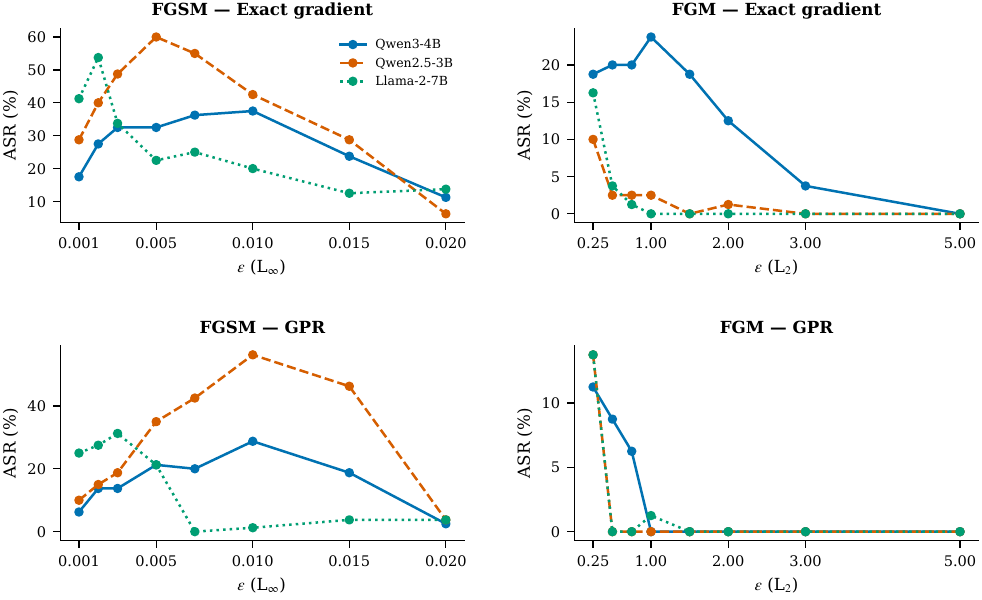}
        \caption{ASR vs.\ $\epsilon$ for exact and GPR variants.}
        \label{fig:eps_sensitivity}
    \end{subfigure}\hfill
    \begin{subfigure}{0.49\linewidth}
        \includegraphics[width=\linewidth]{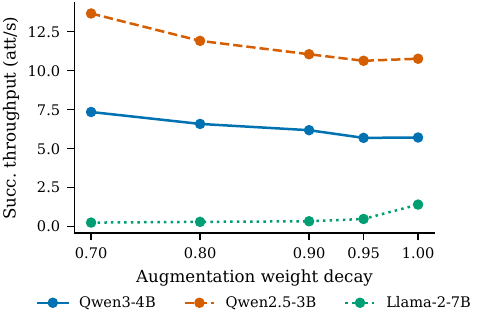}
        \caption{Augmentation weight-decay ablation.}
        \label{fig:weight_decay_ablation}
    \end{subfigure}
    \caption{Sensitivity to perturbation budget $\epsilon$ and predictor-training augmentation weight decay.}
\end{figure}

\paragraph{Layer choice.}
The early-exit layer~$\ell$ trades depth (richer representation) against speed. Figure~\ref{fig:cosine_asr_layer} shows predictor cosine similarity and resulting ASR as a function of~$\ell$, and Figure~\ref{fig:speedup_layer} shows the corresponding speedup over full backpropagation. Cosine similarity is not a perfect proxy for ASR (Figure~\ref{fig:cosine_llama2}), motivating direct ASR-based HP selection.

\begin{figure}[t]
    \centering
    \begin{subfigure}{0.49\linewidth}
        \includegraphics[width=\linewidth]{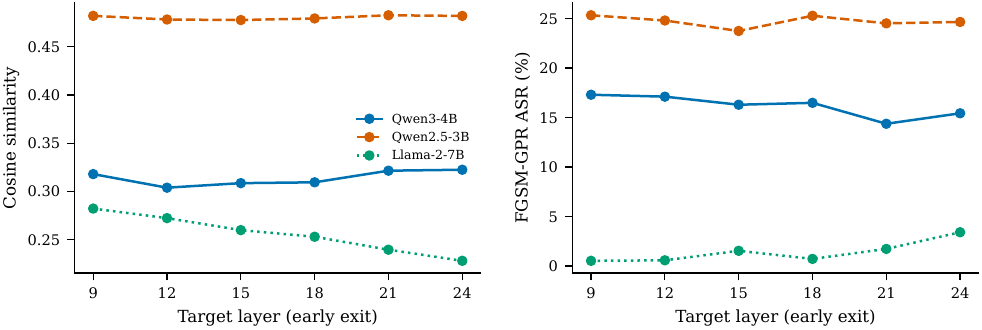}
        \caption{Predictor cosine and ASR vs.\ target layer~$\ell$.}
        \label{fig:cosine_asr_layer}
    \end{subfigure}\hfill
    \begin{subfigure}{0.49\linewidth}
        \includegraphics[width=\linewidth]{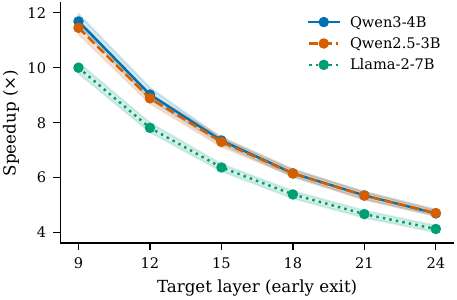}
        \caption{Speedup over full backprop vs.\ target layer~$\ell$.}
        \label{fig:speedup_layer}
    \end{subfigure}
    \caption{Effect of the early-exit layer~$\ell$.}
\end{figure}

\begin{figure}[t]
    \centering
    \includegraphics[width=0.6\linewidth]{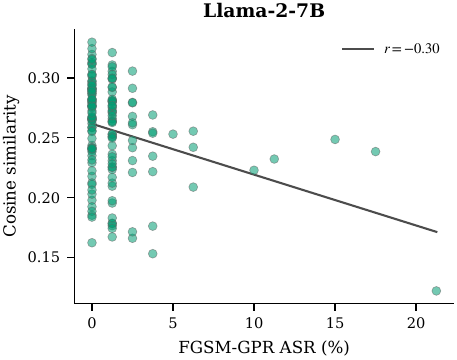}
    \caption{Cosine similarity of predicted gradients vs.\ resulting ASR on Llama-2-7B. The relationship is noisy, showing that cosine isn't the best choice for tuning hyperparameters.}
    \label{fig:cosine_llama2}
\end{figure}

\subsection{Embedding-Level JailbreakBench Results}
\label{sec-emb-jbb}
See Table \ref{tab:main-jbb} for the JailbreakBench results.
\begin{table}[h]
    \centering \footnotesize 
    \caption{Attack success rate (ASR), wall-clock time per example (Pert), and success throughput (Succ.) on the non-trivial subset of JailBreakBench held-out prompts (Qwen3-4B: 309, Qwen2.5-3B: 294, Llama-2-7B: 280). 95\% bootstrap CIs from per-prompt resampling ($B=10{,}000$).}
    \label{tab:main-jbb}
    \setlength{\tabcolsep}{2pt}
\begin{tabular}{@{}l rrr rrr rrr@{}}
\toprule
 & \multicolumn{3}{c}{Qwen3-4B} & \multicolumn{3}{c}{Qwen2.5-3B} & \multicolumn{3}{c}{Llama-2-7B} \\
\cmidrule(lr){2-4} \cmidrule(lr){5-7} \cmidrule(lr){8-10}
Method & ASR\,(\%) & Pert.\,(ex/s) & Succ.\,(att/s) & ASR\,(\%) & Pert.\,(ex/s) & Succ.\,(att/s) & ASR\,(\%) & Pert.\,(ex/s) & Succ.\,(att/s) \\
\midrule
\multicolumn{10}{c}{Exact gradient} \\
\cmidrule{1-10}
FGSM & $4.0_{-4.0}^{+6.0}$ & $6.0_{-0.0}^{+0.0}$ & $\mathbf{0.24}_{-0.24}^{+0.36}$ & $22.0_{-10.0}^{+12.0}$ & $7.5_{-0.1}^{+0.1}$ & $1.65_{-0.75}^{+0.89}$ & $6.0_{-6.0}^{+8.0}$ & $8.8_{-0.1}^{+0.1}$ & $\mathbf{0.53}_{-0.53}^{+0.69}$ \\
FGM & $0.0_{-0.0}^{+0.0}$ & $5.6_{-0.2}^{+0.1}$ & $\mathbf{0.00}_{-0.00}^{+0.00}$ & $8.0_{-6.0}^{+8.0}$ & $6.9_{-0.4}^{+0.2}$ & $0.55_{-0.42}^{+0.58}$ & $0.0_{-0.0}^{+0.0}$ & $8.9_{-0.1}^{+0.1}$ & $\mathbf{0.00}_{-0.00}^{+0.00}$ \\
RS-FGSM & $2.0_{-2.0}^{+4.0}$ & $5.9_{-0.1}^{+0.1}$ & $\mathbf{0.12}_{-0.12}^{+0.24}$ & $20.0_{-10.0}^{+12.0}$ & $7.4_{-0.1}^{+0.1}$ & $1.48_{-0.74}^{+0.88}$ & $12.0_{-8.0}^{+10.0}$ & $8.5_{-0.1}^{+0.1}$ & $\mathbf{1.03}_{-0.68}^{+0.85}$ \\
PGD-2 & $4.0_{-4.0}^{+6.0}$ & $3.0_{-0.1}^{+0.0}$ & $\mathbf{0.12}_{-0.12}^{+0.18}$ & $10.0_{-8.0}^{+8.0}$ & $3.7_{-0.0}^{+0.0}$ & $\mathbf{0.37}_{-0.30}^{+0.30}$ & $14.0_{-8.0}^{+10.0}$ & $4.3_{-0.0}^{+0.0}$ & $\mathbf{0.60}_{-0.35}^{+0.43}$ \\
PGD-5 & $6.0_{-6.0}^{+8.0}$ & $1.2_{-0.0}^{+0.0}$ & $\mathbf{0.07}_{-0.07}^{+0.10}$ & $40.0_{-14.0}^{+14.0}$ & $1.5_{-0.0}^{+0.0}$ & $0.60_{-0.21}^{+0.21}$ & $24.0_{-12.0}^{+12.0}$ & $1.7_{-0.0}^{+0.0}$ & $\mathbf{0.42}_{-0.21}^{+0.21}$ \\
PGD-10 & $30.0_{-12.0}^{+14.0}$ & $0.6_{-0.0}^{+0.0}$ & $\mathbf{0.18}_{-0.07}^{+0.08}$ & $60.0_{-14.0}^{+14.0}$ & $0.8_{-0.0}^{+0.0}$ & $0.45_{-0.11}^{+0.10}$ & $26.0_{-12.0}^{+12.0}$ & $0.9_{-0.0}^{+0.0}$ & $\mathbf{0.23}_{-0.10}^{+0.11}$ \\
\addlinespace
\multicolumn{10}{c}{GPR (predicted)} \\
\cmidrule{1-10}
FGSM & $0.0_{-0.0}^{+0.0}$ & $59.4_{-0.7}^{+0.6}$ & $0.00_{-0.00}^{+0.00}$ & $4.0_{-4.0}^{+6.0}$ & $94.6_{-2.1}^{+1.9}$ & $\mathbf{3.78}_{-3.78}^{+5.75}$ & $0.0_{-0.0}^{+0.0}$ & $70.2_{-0.8}^{+0.8}$ & $0.00_{-0.00}^{+0.00}$ \\
FGM & $0.0_{-0.0}^{+0.0}$ & $55.7_{-1.1}^{+0.9}$ & $0.00_{-0.00}^{+0.00}$ & $22.0_{-12.0}^{+12.0}$ & $64.6_{-1.1}^{+1.0}$ & $\mathbf{14.21}_{-7.60}^{+7.70}$ & $0.0_{-0.0}^{+0.0}$ & $46.1_{-0.5}^{+0.5}$ & $0.00_{-0.00}^{+0.00}$ \\
RS-FGSM & $0.0_{-0.0}^{+0.0}$ & $59.0_{-0.9}^{+0.7}$ & $0.00_{-0.00}^{+0.00}$ & $8.0_{-6.0}^{+8.0}$ & $94.8_{-1.6}^{+1.5}$ & $\mathbf{7.58}_{-5.70}^{+7.65}$ & $0.0_{-0.0}^{+0.0}$ & $70.0_{-1.1}^{+0.9}$ & $0.00_{-0.00}^{+0.00}$ \\
PGD-2 & $0.0_{-0.0}^{+0.0}$ & $31.2_{-0.4}^{+0.3}$ & $0.00_{-0.00}^{+0.00}$ & $0.0_{-0.0}^{+0.0}$ & $51.6_{-0.7}^{+0.6}$ & $0.00_{-0.00}^{+0.00}$ & $0.0_{-0.0}^{+0.0}$ & $37.6_{-0.3}^{+0.3}$ & $0.00_{-0.00}^{+0.00}$ \\
PGD-5 & $0.0_{-0.0}^{+0.0}$ & $13.0_{-0.1}^{+0.1}$ & $0.00_{-0.00}^{+0.00}$ & $4.0_{-4.0}^{+6.0}$ & $22.3_{-0.2}^{+0.2}$ & $\mathbf{0.89}_{-0.89}^{+1.35}$ & $0.0_{-0.0}^{+0.0}$ & $15.7_{-0.1}^{+0.1}$ & $0.00_{-0.00}^{+0.00}$ \\
PGD-10 & $0.0_{-0.0}^{+0.0}$ & $6.6_{-0.0}^{+0.0}$ & $0.00_{-0.00}^{+0.00}$ & $16.0_{-10.0}^{+10.0}$ & $11.3_{-0.1}^{+0.1}$ & $\mathbf{1.81}_{-1.13}^{+1.15}$ & $0.0_{-0.0}^{+0.0}$ & $8.0_{-0.1}^{+0.1}$ & $0.00_{-0.00}^{+0.00}$ \\
\bottomrule
\end{tabular}

\end{table}

\subsection{Further Details on Throughput}
\label{sec-exp-throughput}
\paragraph{Throughput comparison.}
Figure~\ref{fig:throughput_comparison} contrasts per-method successful-attack throughput between exact and GPR variants on a log scale.

\begin{figure}[t]
    \centering
    \includegraphics[width=\linewidth]{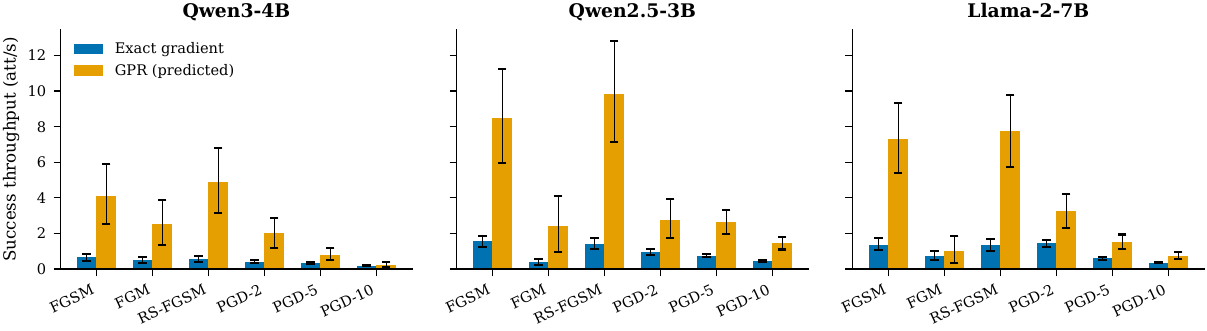}
    \caption{Per-method successful-attack throughput on HarmBench.}
    \label{fig:throughput_comparison}
\end{figure}

\section{Additional background on GP derivatives}\label{app:gp-derivatives}
This appendix records a standard GP derivative-conditioning formula used as motivation in the main text.
Let $f:\R^d\to\R^m$ be a (vector-valued) Gaussian process with mean $m(\cdot)$ and kernel
$K(\cdot,\cdot)$ so that $K(x,x')=\Cov(f(x),f(x'))$ \citep{williams2006gaussian}.
Assume $m$ is continuously differentiable and $K$ is twice continuously differentiable (entrywise).
Then the partial-derivative process $g_j(x)\coloneqq \partial f(x)/\partial x_j\in\R^m$ exists (in a mean-square sense)
and $(f,g_j)$ is jointly Gaussian \citep[Chapter 9.4]{williams2006gaussian}.
Given a finite input set $X=\{x_1,\dots,x_n\}$ and noiseless observations $y=f(X)$, conditioning the joint Gaussian yields
the posterior mean of the derivative at $x_\star$:
\begin{equation}
\E\!\left[g_j(x_\star)\mid f(X)=y\right]
=
m_{g_j}(x_\star)
+
K_{g_j f}(x_\star,X)\, K_{ff}(X,X)^{-1}\,\big(y - m_f(X)\big),
\label{eq-affine}
\end{equation}
with $m_f(X)$ the stacked mean at $X$, $m_{g_j}(x_\star)=\partial m(x_\star)/\partial x_j$, and
$\big[K_{g_j f}(x_\star,X)\big]_i = \frac{\partial}{\partial x_{\star,j}}K(x_\star,x_i)$ (blockwise).

\section{Proofs}\label{app:proofs}

\begin{proof}[Proof of Lemma~\ref{lem:deep-sincos-stationary}]
Fix $c\in\R^d$. For $\alpha\in\R$, define the rotation matrix
\[
R(\alpha)\;:=\;\begin{bmatrix}\cos\alpha & \sin\alpha\\ -\sin\alpha & \cos\alpha\end{bmatrix}.
\]
For each $j$, set $\delta_j:=\omega_j^\top c$ and define rotated coefficients
\[
\begin{bmatrix}u_j'\\ v_j'\end{bmatrix}
\;:=\;R(\delta_j)\begin{bmatrix}u_j\\ v_j\end{bmatrix}.
\]
A direct angle-addition expansion shows that, for all $x$,
\begin{align*}
e_j(x+c)
&= u_j\cos(\omega_j^\top(x+c)) + v_j\sin(\omega_j^\top(x+c)) \\
&= u_j\cos(\omega_j^\top x+\delta_j)+v_j\sin(\omega_j^\top x+\delta_j) \\
&= u_j'\cos(\omega_j^\top x)+v_j'\sin(\omega_j^\top x).
\end{align*}
Let $\eta:=\{(\omega_j,u_j,v_j)\}_{j=1}^m$ and $\eta':=\{(\omega_j,u_j',v_j')\}_{j=1}^m$.
Then $e(x+c;\eta)=e(x;\eta')$ for all $x$.

Moreover, conditional on $\omega_j$, $(u_j',v_j')$ is an orthogonal transform of $(u_j,v_j)$.
Since $(u_j,v_j)\sim\mathcal{N}(0,I_2)$ is rotationally invariant, we have
$(u_j',v_j')\mid \omega_j\sim\mathcal{N}(0,I_2)$.
Thus $\eta'\stackrel{d}{=}\eta$, and $\eta'$ remains independent of $\Theta$.

\emph{Stationary kernel.}
For the scalar field $Y(x)=F(e(x;\eta);\Theta)$, we have
\[
Y(x+c;\eta,\Theta)=F(e(x+c;\eta);\Theta)=F(e(x;\eta');\Theta)=Y(x;\eta',\Theta),
\]
and similarly $Y(x'+c;\eta,\Theta)=Y(x';\eta',\Theta)$. Taking expectations and using
$(\eta',\Theta)\stackrel{d}{=}(\eta,\Theta)$ yields $k(x+c,x'+c)=k(x,x')$.

\emph{Constant mean of $H$.}
For $H(x)=R(e(x;\eta);\Theta)$, the same argument gives
\[
H(x+c;\eta,\Theta)=R(e(x+c;\eta);\Theta)=R(e(x;\eta');\Theta)=H(x;\eta',\Theta).
\]
Assuming $\E\|H(x)\|<\infty$, taking expectations yields $\mu_H(x+c)=\mu_H(x)$ for all $x,c$, hence $\mu_H$ is constant.

\emph{Constant mean of $G=\nabla_x Y$.}
Assume $Y(\cdot)$ is differentiable in $x$ almost surely. From the identity
$Y(x+c;\eta,\Theta)=Y(x;\eta',\Theta)$ as functions of $x$, differentiating both sides gives
$G(x+c;\eta,\Theta)=G(x;\eta',\Theta)$ on the event of differentiability.
Assuming $\E\|G(x)\|<\infty$, taking expectations yields $\mu_G(x+c)=\mu_G(x)$ for all $x,c$, hence $\mu_G$ is constant.
\end{proof}

\begin{proof}[Proof of Lemma~\ref{lem:stationary-implies-constant-sigma}]
First,
\[
\Sigma_{HH}(x)=\E[H(x)H(x)^\top]=\Psi_{HH}(x-x)=\Psi_{HH}(0),
\]
so $\Sigma_{HH}(x)$ is independent of $x$.

Next, the $i$-th row of $\Sigma_{GH}(x)$ equals
\[
\big[\Sigma_{GH}(x)\big]_{i,:}
=\E\!\left[(\partial_{x_i}Y(x))\,H(x)^\top\right]
=\left.\partial_{x_i}\E\!\left[Y(x)H(x')^\top\right]\right|_{x'=x}
\]
by the exchange assumption. Using stationarity $\E[Y(x)H(x')^\top]=\Psi_{YH}(x-x')$ and setting $r:=x-x'$,
we have $\partial_{x_i}\Psi_{YH}(x-x')=\partial_{r_i}\Psi_{YH}(r)$. Evaluating at $x'=x$ (i.e.\ $r=0$) gives
\[
\big[\Sigma_{GH}(x)\big]_{i,:}
=\left.\partial_{r_i}\Psi_{YH}(r)\right|_{r=0},
\]
which is independent of $x$. Since this holds for every $i$, $\Sigma_{GH}(x)$ is constant in $x$.
\end{proof}

\begin{proof}[Proof of Proposition~\ref{prop:affine-predictor}]
Since $(H(x),G(x))$ is jointly Gaussian for each $x$, the Gaussian conditioning identity \eqref{eq:gauss-cond} applies:
\[
\E\!\left[G(x)\mid H(x)\right]
=
\mu_G+\Sigma_{GH}\Sigma_{HH}^{-1}(H(x)-\mu_H).
\]
By assumption, $\mu_H,\mu_G,\Sigma_{HH},\Sigma_{GH}$ do not depend on $x$ and $\Sigma_{HH}$ is invertible. Defining
$W=\Sigma_{GH}\Sigma_{HH}^{-1}$ and $b=\mu_G-W\mu_H$ yields
$\E[G(x)\mid H(x)]=W\,H(x)+b$ for all $x$, with $(W,b)$ independent of $x$.
\end{proof}

\end{document}